\newtheorem{theorem}{Theorem}
\newtheorem{definition}{Definition}
\newtheorem{lemma}[theorem]{Lemma}
\def\BibTeX{{\rm B\kern-.05em{\sc i\kern-.025em b}\kern-.08em
    T\kern-.1667em\lower.7ex\hbox{E}\kern-.125emX}}
\begin{document}
\title{Implicit Hypergraph Neural Network}

\author{
\begin{minipage}[t]{0.27\textwidth}
    \centering
    Akash Choudhuri$^{*}$ \\
    \textit{Dept of Computer Science, University of Iowa} \\
    akash-choudhuri@uiowa.edu
\end{minipage}%
\hfill

\vspace{1em}

\begin{minipage}[t]{0.27\textwidth}
    \centering
    Yongjian Zhong$^{*}$ \\
    \textit{Dept of Computer Science, University of Iowa} \\
    yongjian-zhong@uiowa.edu
\end{minipage}%
\hfill
\begin{minipage}[t]{0.27\textwidth}
    \centering
    Bijaya Adhikari \\
    \textit{Dept of Computer Science, University of Iowa} \\
    bijaya-adhikari@uiowa.edu
\end{minipage}
\thanks{$^{*}$ Equal contribution.}
}
\maketitle

\begin{abstract}
Hypergraphs offer a generalized framework for capturing high-order relationships between entities and have been widely applied in various domains, including healthcare, social networks, and bioinformatics. Hypergraph neural networks, which rely on message-passing between nodes over hyperedges to learn latent representations, have emerged as the method of choice for predictive tasks in many of these domains. These approaches typically perform only a small number of message-passing rounds to learn the representations, which they then utilize for predictions. The small number of message-passing rounds comes at a cost, as the representations only capture local information and forego long-range high-order dependencies. However, as we demonstrate, blindly increasing the message-passing rounds to capture long-range dependency also degrades the performance of hyper-graph neural networks.

Recent works have demonstrated that implicit graph neural networks capture long-range dependencies in standard graphs while maintaining performance. Despite their popularity, prior work has not studied long-range dependency issues on hypergraph neural networks. Here, we first demonstrate that existing hypergraph neural networks lose predictive power when aggregating more information to capture long-range dependency. We then propose \textsc{Implicit Hypergraph Neural Network (IHNN)}, a novel framework that jointly learns fixed-point representations for both nodes and hyperedges in an end-to-end manner to alleviate this issue. Leveraging implicit differentiation, we introduce a tractable projected gradient descent approach to train the model efficiently. Extensive experiments on real-world hypergraphs for node classification demonstrate that IHNN outperforms the closest prior works in most settings, establishing a new state-of-the-art in hypergraph learning. 
\end{abstract}


\section{Introduction}
Hypergraphs~\cite{antelmi2023survey} are a generalization of graphs to higher-order relationships. Graphs, where edges can only connect two nodes, capture pairwise relationships. On the other hand, hyperedges in hypergraphs are free to connect an arbitrary number of nodes, enabling them to define higher-order relations involving multiple nodes. The increased expressiveness of hypergraphs has found applications in various domains, including healthcare (where multiple patients are housed in a single inpatient room)~\cite{choudhuri2025domain}, social networks (many users subscribe to a group/subreddit/channel)~\cite{li2013link}, bio-informatics~\cite{tian2009hypergraph}, and cyber-security~\cite{lin2024hypergraph}.

The staggering success of Graph Neural Networks (\textsc{GCNs}) ~\cite{kipf2016semi}, designed for predictive tasks over graphs, has also inspired convolutional neural networks over hypergraphs~\cite{feng2019hypergraph}. Hypergraph neural networks (\textsc{HNNs}) resemble standard \textsc{GCNs} but pass node-embedding messages over hyperedges instead of regular edges. This approach has enabled a number of applications, including disease incidence prediction~\cite{choudhuri2025domain,anand2024h2abm} and echo chamber detection~\cite{hickok2022bounded}, in the domains mentioned above.  However, as generalizations of graph neural networks, \textsc{HNNs} inherit some weaknesses unique to graph neural networks. 
\begin{figure*}[t]
\centering
\begin{subfigure}{0.32\textwidth}
    \centering
    \includegraphics[width=1\textwidth]{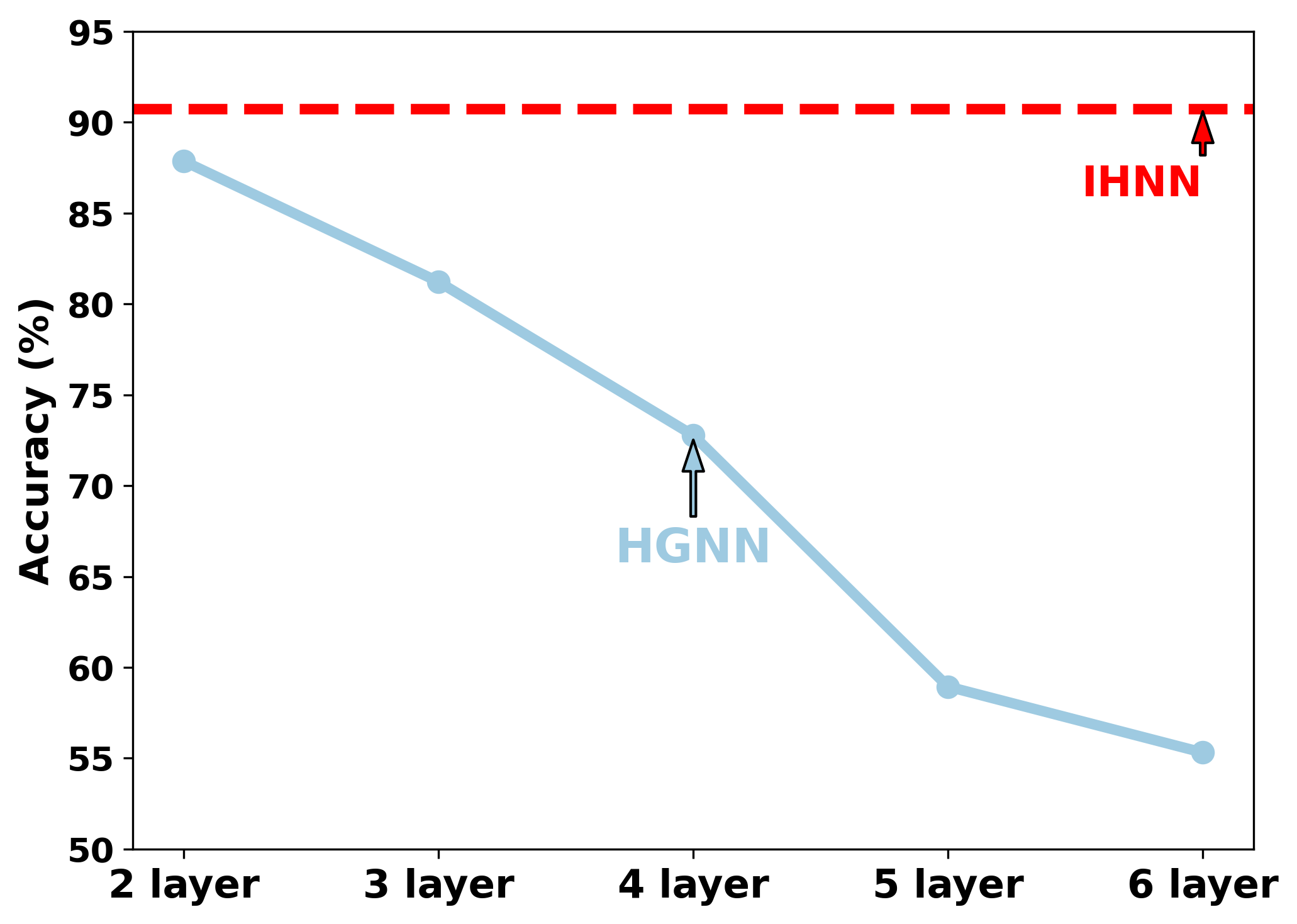}
    \label{fig:subfig1}
\end{subfigure}
\hfill
\begin{subfigure}{0.32\textwidth}
    \centering
    \includegraphics[width=1\textwidth]{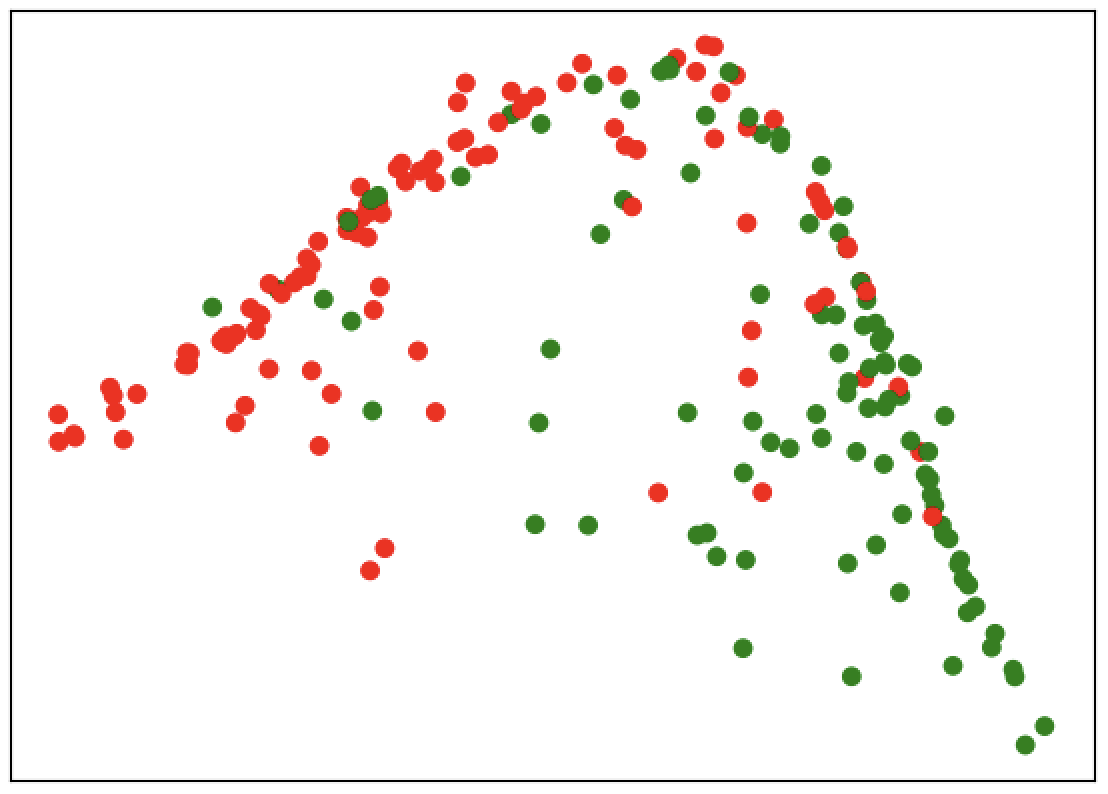}
    \label{fig:subfig2}
\end{subfigure}
\hfill
\begin{subfigure}{0.32\textwidth}
    \centering
    \includegraphics[width=1\textwidth]{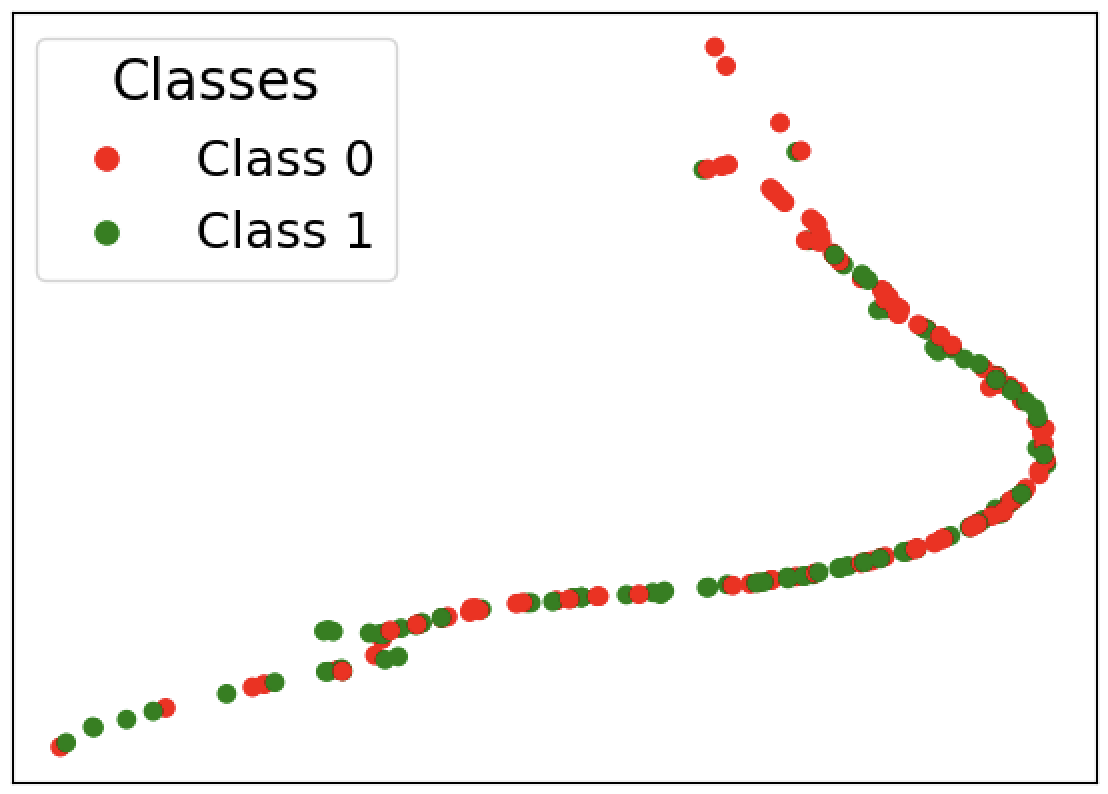}
    \label{fig:subfig3}
\end{subfigure}
\caption{ Best viewed in color.  \textbf{Left:} Performance of \textsc{HGNN} (solid blue line) deteriorates as more layers are stacked on top of each other to capture long-range dependency, while the proposed approach \textsc{IHNN} (red dashed line), which inherently captures long-range dependency, performs the best.  \textbf{Center: }T-SNE visualization of node embeddings for the 2-layer HGNN. \textbf{Right:} The same for 5-layer HGNN. The higher number of layers leads to indiscriminative representations.}
\label{fig:figure1}
\end{figure*}

One such limitation is the fixed radius of information aggregation. \textsc{GCNs} learn representations of nodes by convolving over neighborhoods of a fixed diameter $k$.  Typically, $k$ is smaller than the diameter of the network, hence, GCNs cannot capture long-range dependencies between the nodes that go beyond the radius $k$.  Stacking multiple layers of GCNs to capture relationships between nodes far away in the network can increase the information aggregation radius; unfortunately, it also leads to performance degradation~\cite{chen2020measuring}. This is because as more neighborhood information is aggregated, the node embeddings appear to smooth out, resembling each other more and more and losing discriminative power.

\textsc{HNNs} also convolve over neighborhoods as described by the hyperedges to learn node representations. Hence, the question of whether \textsc{HNNs} can maintain performance while increasing the convolution radius to capture long-range dependency is valid. Here, we first put this question to the test. Through our experiments, we discovered that stacking multiple layers of \textsc{HNNs} leads to performance degradation similar to GCNs (See Figure~\ref{fig:figure1}). Intuitively, this pattern is expected as message passing rounds in \textsc{HNNs} typically involve more nodes than standard \textsc{GCNs}. To demonstrate this, we started by running Hypergraph Neural Network~\cite{feng2019hypergraph} for a fixed radius of 2 on the \textbf{senate-bills} dataset~\cite{chodrow2021generative,fowler2006legislative} and kept track of the performance. We repeated the same experiment while increasing the convolution radius and noticed a degradation pattern  (solid blue line) similar to that of a standard \textsc{GCNs}.  

Implicit Graph Neural Networks~\cite{gu2020implicit}, \textsc{IGNNs}, overcome this issue by learning a \textsc{fixed point} representation of nodes by iterating \textsc{GCN}-like models until convergence. Since the number of iterations can be arbitrarily large, these models can capture information from the entire graph. Empirical evidence shows that these approaches overcome the long-range dependency issues while maintaining performance. 

Despite their importance in numerous applications, no prior work has studied mitigating long-range dependency issues in \textsc{HNNs}. Here, we propose \textsc{Implicit Hypergraph Neural Network} (\textsc{IHNN}), the first implicit model designed for hypergraphs. \textsc{IHNN} explicitly models the node-hyperedge relationship, a key aspect of hypergraph learning, by jointly learning embeddings for both nodes and hyperedges. Additionally, \textsc{IHNN} incorporates a novel membership regularization technique that enhances representation learning and leverages implicit differentiation for training, and it also provably converges to fixed-point embeddings. The key contributions of our paper are as follows:
\begin{itemize}
    \item We propose a novel implicit hypergraph model, \textsc{Implicit Hypergraph Neural Network} (\textsc{IHNN}), which captures long-range dependency between nodes in hypergraphs while maintaining the predictive performance.
    \item To train \textsc{IHNN}, we propose a novel membership regularization technique and a tractable projected gradient descent approach that leverages implicit differentiation. This approach ensures the convergence of \textsc{IHNN}.
    \item We conduct extensive experiments on five real-world datasets of various sizes to demonstrate the state-of-the-art performance on \textsc{IHNN}. We also conduct a case study using real EHR data to demonstrate the broader applicability of IHNN.
    
\end{itemize}

\section{Related Works}
\par\noindent\textbf{Machine Learning on Hypergraphs:}
ML on hypergraph has garnered significant attention for its ability to model complex higher-order relationships within data~\cite{antelmi2023survey}. In this paper, we focus on hypergraph node classification, a fundamental task in hypergraph learning.   Feng et al.~\cite{feng2019hypergraph} introduced the first message-passing neural network tailored for hypergraph learning, laying the foundation for subsequent advancements. Yadati et al.~\cite{yadati2019hypergcn} proposed a novel transformation technique that maps hypergraphs to conventional graphs, facilitating the application of standard graph neural networks. To better capture the importance of high-cardinality hyperedges and high-degree nodes, Dong et al.~\cite{dong2020hnhn} develop a normalization scheme that enhances representation learning. Bai et al.~\cite{bai2021hypergraph} further refine message passing in hypergraph neural networks by incorporating attention mechanisms. Expanding on the flexibility of hypergraph learning, Chien et al.~\cite{chienyou} introduce a highly general framework that integrates the Set Transformer~\cite{lee2019set} with hypergraph neural networks, enabling more expressive representation learning. This is then followed by recent works be Wang et al.~\cite{wang2022equivariant} that applies diffusion operators on hypergraphs and Wang et al.~\cite{wang2023hypergraph} that defines parametrized hypergraph-regularized energy functions and demonstrate how their minimizes serve as learnable node embeddings. 

This growing body of research underscores the potential of hypergraph-based models in capturing intricate structural dependencies beyond what traditional graph models can achieve. This has led to the deployment of hypergraphs in several applications~\cite{xu2023hypergraph,choudhuri2025domain,la2022music,wang2021session}.

\par\noindent\textbf{Graph Implicit Models:}
Implicit models define their outputs using fixed-point equations. Bai et al.~\cite{bai2019deep} proposed an equilibrium model for sequence data, where the solution is obtained from a fixed-point equation. The Implicit Graph Neural Network (IGNN)~\cite{gu2020implicit} extends this idea to graph data and demonstrates implicit model can capture long-range dependencies on graphs. Park et al.~\cite{park2021convergent} introduce an equilibrium-based GNN with a linear transition map, ensuring contraction to guarantee the existence and uniqueness of a fixed point. Liu et al.~\cite{liu2021eignn} developed an infinite-depth GNN that captures long-range dependencies while bypassing iterative solvers by deriving a closed-form solution. El Ghaoui et al.~\cite{el2021implicit} proposed a general implicit deep learning framework, addressing the well-posedness of implicit models. Meanwhile, Chen et al.~\cite{chen2022optimization} formulated their model using the diffusion equation as an equilibrium constraint and solved a convex optimization problem to determine the fixed point.
Zhong et al.~\cite{zhong2024efficient} proposed an implicit model for dynamic graph learning and a bilevel training algorithm, which can efficiently capture the long-range dependency on dynamic graphs and subgraphs~\cite{zhongimplicit}.

\section{Method}
\begin{figure*}[ht!]
    \centering
    \includegraphics[width=.9\textwidth]{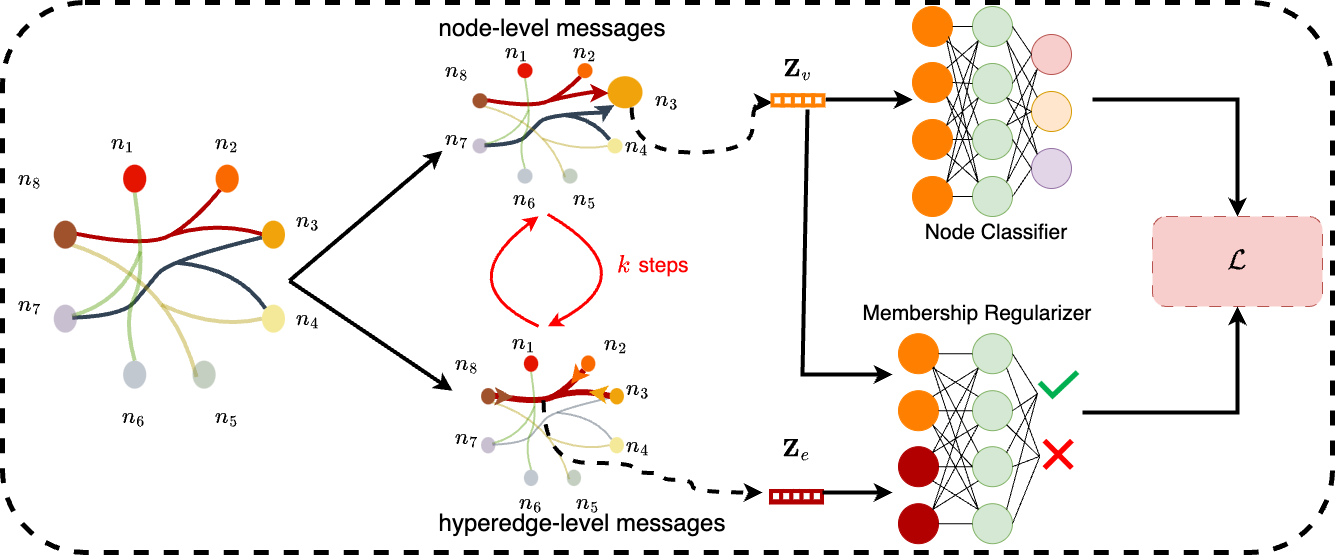}
    \caption{Proposed IHNN Architecture: The framework cyclically obtains the fixed-point node and hyperedge embeddings by iterating the proposed model. The membership regularizer samples a set of node-hyperedge pairs from the training data and performs binary classification to predict if the node is present in the hyperedge or not. Additionally, the node classification loss is computed, and the overall loss $\mathcal{L}$ is the weighted sum of the node classification and membership regularization losses.}
    \label{fig:model}
\end{figure*}

\subsection{Preliminaries}
\par\noindent\textbf{Hypergraph:} A hypergraph is a generalization of a graph where an edge can connect any number of vertices. Formally, a hypergraph $\mathcal{H}$ is defined as a pair: $\mathcal{H}=(\mathcal{V}, \mathcal{E})$, where $\mathcal{V}$ is a set of nodes, i.e., $\mathcal{V} = \{v_1, v_2, \dots, v_n\}$ and $\mathcal{E}$ is a set of hyperedges, where each hyperedge $e \in \mathcal{E}$ is a subset of $\mathcal{V}$. That is, $\mathcal{E} \subset \mathcal{P}(\mathcal{V})$, where $\mathcal{P}(\mathcal{V})$ is the power set of $\mathcal{V}$. Hypergraphs are commonly represented by an incidence matrix \( \mathbf{H} \in \mathbb{R}^{|\mathcal{V}| \times |\mathcal{E}|} \), where each entry \( \mathbf{H}_{ab} \) is 1 if node \( a \) is incident upon hyperedge \( b \), and 0 otherwise.
The node degree matrix \( \mathbf{D}_{v} \in \mathbb{R}^{|\mathcal{V}| \times |\mathcal{V}|} \) and the hyperedge degree matrix \( \mathbf{D}_{e} \in \mathbb{R}^{|\mathcal{E}| \times |\mathcal{E}|} \) are diagonal matrices where the diagonal entry \( [\mathbf{D}_{v}]_{ii} \) represents the number of hyperedges incident on node \( v_i \), while \( [\mathbf{D}_{e}]_{ii} \) represents the cardinality of hyperedge \( e_i \).

In the current setup, we are given a node feature matrix $\mathbf{X} \in \mathbb{R}^{|\mathcal{V}|\times d}$, where $d$ is the number of dimensions. Each row of $\mathbf{X}$ corresponds to the feature vector for each node $v_i \in \mathcal{V}$. When no feature matrix is given, the learning procedure can be extended to include feature initialization separately; or the features can be randomly initialized. Additionally, we are also provided with the node label set $\mathcal{Y}$ where each node $v_i \in \mathcal{V}$ has a label $y_i \in \mathcal{Y}$.

\par\noindent\textbf{Problem Setting:} In this work, we focus on a transductive node classification problem where the nodes in the training set $\mathcal{V}_{\text{train}}$ and their corresponding labels $\mathcal{Y}_{\text{train}}$ is revealed to the model during the training. The model is then tasked with predicting the labels of the nodes in the test set $\mathcal{V}_{\text{test}}$. Note that $\mathcal{V}_{\text{train}} \cap \mathcal{V}_{\text{test}}=\emptyset$. Note that although the labels of the nodes in the test set are hidden, the overall structure of the hypergraph remains visible to the model during training in the transductive setting. Having defined all the notations, our overall problem statement is defined as follows:

\medskip
\noindent\fbox{%
    \parbox{0.97\columnwidth}{%
\textbf{Given:} A hypergraph $\mathcal{H} = (\mathcal{V},\mathcal{E})$, with feature matrix $\mathbf{X}$  and a set of labeled nodes $\mathcal{V}_{\text{train}}$, corresponding labels $\mathcal{Y}_{\text{train}}$, and a set of unlabeled nodes $\mathcal{V}_{\text{test}}$.\\
\textbf{Infer:} A predictor $f(\cdot)$ which predicts the node labels.
}
}

\subsection{Implicit Hypergraph Neural Network}
\par\noindent\textbf{Graph Implicit Models:} Implicit models~\cite{el2021implicit,geng2021training,gu2020implicit} typically learn representations by finding the fixed point solution to a function \( f(\mathbf{Z},\mathbf{X}) \), where \( f(\cdot) \) is typically a universal function approximator, such as a neural network. Here \( \mathbf{X} \) represents the input features, and \( \mathbf{Z} \) denotes the learned embeddings. The fixed-point embeddings are obtained iteratively as follows:

\begin{equation}
   \mathbf{Z}^* = \lim_{k\rightarrow\infty} \mathbf{Z}_{k+1} = \lim_{k\rightarrow\infty} f(\mathbf{Z}_k, \mathbf{X}) = f(\mathbf{Z}^*, \mathbf{X}) 
\end{equation}

Such a function $f(\cdot)$ is said to be well-posed if there is a fixed-point solution to it (i.e., it produces converged representations that do not change on further iteration). To ensure well-posedness, \( f(\cdot) \) is typically required to be a contractive mapping~\cite{gu2020implicit}.

\begin{definition}[Contractive Function]
    A function $f(\cdot)$ is called $\mu$-contractive, for $0<\mu<1$, if it satisfies $\|f(x)-f(y)\|\le \mu\|x-y\|$ for any $x,y \in \text{dom}~f$.
\end{definition} 
\begin{definition}[Well-posedness]
    Let $f_\theta$ be a function parameterized by $\theta$. We claim $\theta$ is well-posed for $f_\theta$ if the equation \( \mathbf{Z}=f_\theta(\mathbf{Z},\mathbf{X}) \), for any \(\mathbf{X}\), admits a unique solution.
\end{definition}

Note that all implicit graph neural networks satisfy the well-posedness property~\cite{gu2020implicit}.


 \par\noindent\textbf{Hypergraph Neural Network:} 
Any implicit hypergraph neural network we design must utilize a graph neural network specifically designed for hypergraphs. Hence, a straightforward approach for our aspirations is to start with an existing hypergraph neural network and define an implicit function over it. Let us examine one of the state-of-the-art hypergraph neural networks (HGNN)~\cite{feng2019hypergraph}, as illustrated in the equation below.
 


\begin{align}
\label{eq:HNN}
    \mathbf{Z}_{t+1} &= \sigma(\mathbf{\mathbf{D}_{v}^{-\frac{1}{2}} \mathbf{H} \mathbf{D}_e^{-1} \mathbf{H}^\top \mathbf{D}_{v}^{-\frac{1}{2}}Z}_t\mathbf{W}).
\end{align}

Here, $\mathbf{Z}_{t}$ is the embedding at the $t^{th}$ layer and \( \mathbf{W} \) is a learnable parameter. The rest of the notations are as defined earlier. Now, for further discussion let \( \mathbf{L} := \mathbf{D}_{v}^{-\frac{1}{2}} \mathbf{H} \mathbf{D}_e^{-1} \mathbf{H}^\top \mathbf{D}_{v}^{-\frac{1}{2}} \). The equation above can now be simplified as $\mathbf{Z}_{t+1} = \sigma(\mathbf{L} \mathbf{Z}_{t} \mathbf{W})$. This formulation treats \( \mathbf{L} \), which is the approximated hypergraph Laplacian matrix~\cite{feng2019hypergraph}, analogously to the adjacency matrix in conventional graphs. Since the function \( \sigma \) is a non-expansive activation function (e.g., ReLU, sigmoid), the formulation leads to a meaningful implicit graph neural network over hypergraphs. However, a key drawback of this formulation is that it solely focuses on learning node embeddings. It disregards the key distinction between hypergraphs and conventional graphs, the varying cardinality of hyperedges. Consequently, developing an implicit model based on Equation \ref{eq:HNN} can only yield suboptimal results (as validated by our experiments).

\par\noindent\textbf{Our Method:} The limitation of the implicit model based on equation \ref{eq:HNN} indicates that a richer interplay between nodes and hyperedges can be further exploited. To achieve this, the method we design must also maintain hyperedge embeddings, in addition to learning node embeddings. To this end,  we start by rewriting the hypergraph Laplacian matrix as follows:

$$\mathbf{L} = \mathbf{L}_{ve} \mathbf{L}_{ve}^T \text{, where } \mathbf{L}_{ve} := \mathbf{D}_{v}^{-\frac{1}{2}} \mathbf{H} \mathbf{D}_e^{-\frac{1}{2}} $$

In this form, \( \mathbf{L}_{ve} \) serves as a transformation from node space to hyperedge space, while \( \mathbf{L}_{ve}^T \) does the reverse. Leveraging this insight, we can establish a new formulation that models the interplay between nodes and hyperedges as follows:
\begin{align}
    \label{eq:interplay}
     [\mathbf{Z}_v]_{t+1} = \sigma( \mathbf{L}_{ve}^T[\mathbf{Z}_e]_t \mathbf{W} + b_\Omega(\mathbf{X}) )\\ \nonumber
     [\mathbf{Z}_e]_{t+1} = \sigma\left( \mathbf{L}_{ve}[\mathbf{Z}_v]_t \mathbf{W} + b_\Omega(\mathbf{X}) \right)
\end{align}
Here, \( \mathbf{Z}_v \) and \( \mathbf{Z}_e \) denote the embeddings of nodes and hyperedges, respectively, and \( b_{\Omega}(\cdot) \) is an affine transformation parameterized by \( \Omega \). In this formulation, we update the node embeddings by aggregating information from hyperedge embeddings, and update the hyperedge embeddings by aggregating information from node embeddings iteratively. 

Iterating over $t$ in Equation (\ref{eq:interplay}) leads to fixed-point embeddings for nodes and hyperedges if our formulation is well-posed. Also note obtaining fixed-point embeddings by iterating over Equation (\ref{eq:interplay}) is equivalent to solving the following equation.
\begin{align}
    \label{eq:IHNN_formulation}
    \begin{pmatrix}
        \mathbf{Z}_v\\
        \mathbf{Z}_e
    \end{pmatrix} = & \sigma\left(
    \begin{bmatrix}
        0 & \mathbf{L}_{ve}^T\\
        \mathbf{L}_{ve} & 0
    \end{bmatrix}
    \begin{pmatrix}
        \mathbf{Z}_v\\
        \mathbf{Z}_e
    \end{pmatrix} \mathbf{W} +  b_{\Omega}(\begin{pmatrix}
        \mathbf{X}_v\\
        \mathbf{X}_e
    \end{pmatrix}) \right)
\end{align}
where $\mathbf{X}_v$ is the node features, and $\mathbf{X}_e$ is the hyperedge features. $\mathbf{X}_e$ is initialized as the average of the features of their member nodes in case they are unavailable.

For the well-posedness of our model, we provide the following lemma, which has similar requirements as other implicit models \cite{gu2020implicit, zhong2024efficient}
\begin{lemma}
    Let \(\Bar{\mathbf{A}}=\left[ 0~\mathbf{L}_{ve}^T;~\mathbf{L}_{ve}~0 \right] \), and let \(\|\cdot\|_{\text{op}}\) and \(\|\cdot\|_\infty\) denote operator and infinity norms, respectively. \(\mathbf{W}\) is well-posed for Equation (\ref{eq:IHNN_formulation}) if \(\|\Bar{\mathbf{A}}\|_{\text{op}}  \|\mathbf{W}\|_{\infty}< 1 \) for fixed \(\Bar{\mathbf{A}}\).
\end{lemma}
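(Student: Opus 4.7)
The plan is to establish well-posedness by recognizing that Equation (\ref{eq:IHNN_formulation}) is a fixed-point equation $\mathbf{Z} = F(\mathbf{Z})$ for the self-map
\[
F(\mathbf{Z}) := \sigma\!\left(\bar{\mathbf{A}} \mathbf{Z} \mathbf{W} + b_\Omega(\mathbf{X})\right), \qquad \mathbf{Z} = \begin{pmatrix} \mathbf{Z}_v \\ \mathbf{Z}_e \end{pmatrix},
\]
and then to show $F$ is a strict contraction so that Banach's fixed-point theorem delivers a unique solution $\mathbf{Z}^*$, which is exactly Definition 2's requirement. First I would peel off the activation: since ReLU, sigmoid, tanh, etc., are componentwise $1$-Lipschitz, we have $\|F(\mathbf{Z}_1) - F(\mathbf{Z}_2)\| \le \|\bar{\mathbf{A}}(\mathbf{Z}_1 - \mathbf{Z}_2)\mathbf{W}\|$ in any unitarily invariant norm, so it is enough to bound the Lipschitz constant of the affine part.

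Next I would vectorize via the Kronecker identity $\operatorname{vec}(\bar{\mathbf{A}} \mathbf{M} \mathbf{W}) = (\mathbf{W}^\top \otimes \bar{\mathbf{A}})\operatorname{vec}(\mathbf{M})$, converting matrix multiplication on both sides into a single linear operator acting on $\operatorname{vec}(\mathbf{Z})$. The Lipschitz constant of $F$ is then dominated by the operator norm of $\mathbf{W}^\top \otimes \bar{\mathbf{A}}$, and by choosing a compatible tensor-product norm on the vectorized embedding space one obtains the factorization
\[
\operatorname{Lip}(F) \;\le\; \|\mathbf{W}^\top \otimes \bar{\mathbf{A}}\| \;\le\; \|\bar{\mathbf{A}}\|_{\text{op}} \,\|\mathbf{W}\|_{\infty}.
\]
Under the hypothesis, this quantity is a number $\mu \in (0,1)$ that is independent of the bias term $b_\Omega(\mathbf{X})$ and of $\mathbf{Z}$. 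The embedding space is finite dimensional and hence complete, so Banach's theorem applies and yields the unique fixed point, completing the argument.

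The main obstacle is Step 2, i.e., choosing the right norm so that the Kronecker-product bound genuinely factors as $\|\bar{\mathbf{A}}\|_{\text{op}} \|\mathbf{W}\|_\infty$ rather than as the more standard $\|\bar{\mathbf{A}}\|_{\text{op}} \|\mathbf{W}\|_{\text{op}}$. This mixed-norm setup is unusual; to justify it I would follow the Perron--Frobenius style argument used in IGNN (Gu et al., \cite{gu2020implicit}) and Zhong et al.\ \cite{zhong2024efficient}: pass to the entrywise absolute value $|\mathbf{W}^\top| \otimes |\bar{\mathbf{A}}|$, exploit monotonicity of the fixed-point iteration, and apply a Hölder-type inequality against a compatible tensor norm on $\operatorname{vec}(\mathbf{Z})$ whose dual pairing turns $\mathbf{W}$-multiplication on the right into an $\|\cdot\|_\infty$ bound while retaining the spectral bound on $\bar{\mathbf{A}}$. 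Everything else (nonexpansiveness of $\sigma$, submultiplicativity, invoking Banach) is routine; the contractivity accounting through the Kronecker factorization is where the real work lives.
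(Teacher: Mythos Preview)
Your proposal is correct and follows essentially the same route as the paper: rewrite Equation~(\ref{eq:IHNN_formulation}) compactly, vectorize via the Kronecker identity $\operatorname{vec}(\bar{\mathbf{A}}\mathbf{Z}\mathbf{W}) = (\mathbf{W}^\top \otimes \bar{\mathbf{A}})\operatorname{vec}(\mathbf{Z})$, and then invoke the IGNN machinery (Lemma~B.1 of \cite{gu2020implicit}) to pass from the hypothesis $\|\bar{\mathbf{A}}\|_{\text{op}}\|\mathbf{W}\|_\infty < 1$ to a contractive fixed-point iteration. Your write-up is in fact more explicit than the paper's in flagging that the mixed-norm factorization $\|\mathbf{W}^\top \otimes \bar{\mathbf{A}}\| \le \|\bar{\mathbf{A}}\|_{\text{op}}\|\mathbf{W}\|_\infty$ is the only non-routine step and that it requires the Perron--Frobenius/IGNN argument rather than the naive spectral-norm submultiplicativity.
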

\begin{proof}
We first rewrite the equation (\ref{eq:IHNN_formulation}) as $Z=\sigma(\Bar{\mathbf{A}}\mathbf{ZW}+\mathbf{B})$. Showing $W$ is well-posed for equation (\ref{eq:IHNN_formulation}) is equivalent to show the following vectorized equation admits a unique solution: $\text{vec}({\mathbf{Z}})=\sigma\left(({\mathbf{W}}^T\otimes \Bar{\mathbf{A}})\text{vec}({\mathbf{Z}})+\text{vec}({\mathbf{B}})\right)$. By Lemma B.1 from \cite{gu2020implicit}, we know that if $\|\mathbf{W}^T\otimes \Bar{\mathbf{A}}\|_{op}<1$ then the above equation has a unique solution, which can be attained by fixed-point iteration. Moreover, this constraint can be further converted to $\|{\mathbf{W}}\|_{\infty}\|\Bar{\mathbf{A}}\|_{op}\le\|{\mathbf{W}}\|_{op}\|\Bar{\mathbf{A}}\|_{op}=\|{\mathbf{W}}^T\otimes \Bar{\mathbf{A}}\|_{op}<1$.
\end{proof}

\par\noindent\textbf{Membership Regularization:} To further capture the relationship between nodes and hyperedges, we introduce a novel membership regularization technique. Inspired by the labeling trick~\cite{zhang2021labeling}, which enhances representation learning by incorporating node membership information, our approach encourages the model to better leverage node-hyperedge interactions. Given fixed-point node embeddings \( \mathbf{Z}_v \) and hyperedge embeddings \( \mathbf{Z}_e \), we impose membership regularization as follows:

\begin{enumerate}
    \item \textbf{Hyperedge Sampling:} Sample a batch of hyperedges.
    \item \textbf{Node Sampling:} For each hyperedge in the batch, sample a member or non-member node.
    \item \textbf{Classification:} Concatenate each hyperedge embedding with the embeddings of its corresponding sampled nodes. These concatenated embeddings are then fed into a classifier to predict whether each sampled node belongs to the given hyperedge. The classification loss serves as membership regularization.
\end{enumerate}

\par \noindent \textbf{Overall Model:} Our overall model is presented in Figure \ref{fig:model}. Let \( \hat{\mathbf{Z}} := (\mathbf{Z}_v, \mathbf{Z}_e) \) and \( \hat{\mathbf{X}} := (\mathbf{X}_v, \mathbf{X}_e) \). Then, combining all components of \textsc{IHNN}, the overall objective function can be formulated as follows:

\begin{align}
    \label{eq:IHNN}
    \min_{\theta,\mathbf{W},\Omega,\phi}\;& 
    \mathcal{L}(\theta,\mathbf{W},\Omega,\phi) := \frac{1}{|\mathcal{V}_{\text{train}}|}
    \sum_{i\in \mathcal{V}_{\text{train}}} 
    \ell_1 \!\left(f_\theta(\hat{\mathbf{Z}}),y_i\right) \\
    &\quad + \gamma \ell_2 \!\left(g_{\phi}(\hat{\mathbf{Z}})\right) \nonumber \\
    \text{s.t.}\;& 
    \hat{\mathbf{Z}} = \sigma\!\left(
    \Bar{\mathbf{A}}\hat{\mathbf{Z}} \mathbf{W} + b_{\Omega}(\hat{\mathbf{X}}) \right), ~ 
    \|\mathbf{W}\|_{\infty} \leq \frac{\kappa}{\|\Bar{\mathbf{A}}\|_{\text{op}}}.
\end{align}

Here, \( \ell_1 \) is the classification loss (e.g., cross-entropy), and \( f_\theta \) consists of two components: (1) a cross-attention mechanism that maps \( (\mathbf{Z}_v, \mathbf{Z}_e) \) to node embeddings, and (2) a classifier that predicts node labels. The function \( g_\phi \) is the membership regularize and \( \ell_2 \) is the regularization loss, \( \gamma \) is a trade-off parameter, and \( \kappa \) is a scalar less than 1 to ensure numerical stability.

\begin{algorithm}[t]
\caption{Training for IHNN}
\label{algo}
\begin{algorithmic}[1]

\REQUIRE Hypergraph \(\mathcal{H}\), feature matrix \(\mathbf{X}\)  
\ENSURE \(\theta, \phi, \mathbf{W}, \Omega\)  

\FOR{\(t = 0,1,\dots,T\)}
    \STATE Compute \(\hat{\mathbf{Z}}\) using fixed-point iteration from Equation (\ref{eq:IHNN_formulation})
    \STATE Sample a batch of hyperedges \(\mathcal{B}_e\) and initialize \(\mathcal{B}_v\) as empty  
    \FOR{\(e \in \mathcal{B}_e\)}
        \STATE With probability \( \frac{1}{2} \), randomly sample a member of \(e\); otherwise, sample a non-member of \(e\)  
        \STATE Add the selected node \(v\) to \(\mathcal{B}_v\)  
    \ENDFOR
    \STATE Feed \([\mathbf{Z}_e(\mathcal{B}_e), \mathbf{Z}_v(\mathcal{B}_v)]\) into classifier \(g_\phi\) and get the regularizer   
    \STATE Feed \([\mathbf{Z}_e, \mathbf{Z}_v]\) into classifier \(f_\theta\) and get the classification loss  
    \STATE Compute the gradients using Equations (\ref{eq:gradient}) and (\ref{eq:gradient_other})  
    \STATE Update parameters and project \(\mathbf{W}\) into the feasible space.  
\ENDFOR  

\end{algorithmic}
\end{algorithm}

\subsection{Training}

To optimize for the objective,  we adopt a projected gradient descent through implicit differentiation, which is a standard training algorithm for implicit models~\cite{gu2020implicit}, enabling direct gradient computation through the fixed point. After updating the parameters, $\mathbf{W}$ will be projected back to the feasible set, for which efficient methods exist \cite{duchi2008efficient}. This projection is also used in other implicit models, such as \cite{gu2020implicit}. 

The gradients with respect to $\theta$ and $\phi$, denoted as $\frac{\partial \mathcal{L}}{\partial \theta}$ and $\frac{\partial \mathcal{L}}{\partial \phi}$ respectively, are readily computed as they do not require backpropagation through the fixed point $\hat{\mathbf{Z}}$.  For other variables, we compute gradients by solving the following fixed-point equation:

\begin{align}
\label{eq:gradient}
\frac{\partial \mathcal{L}}{\partial \hat{\mathbf{Z}}} = \mathbf{D} \odot \left( \hat{\mathbf{A}} \frac{\partial \mathcal{L}}{\partial \hat{\mathbf{Z}}} \mathbf{W} + \bar{\nabla}_{\hat{\mathbf{Z}}} \mathcal{L} \right)
\end{align}
where $\bar{\nabla}_{\hat{\mathbf{Z}}} \mathcal{L}$ represents the gradient backpropagated to $\hat{\mathbf{Z}}$ via automatic differentiation.  $\mathbf{D} = \text{diag}\left(\sigma'\left(\bar{\mathbf{A}} \hat{\mathbf{Z}} \mathbf{W} + b_{\Omega}(\hat{\mathbf{X}}) \right)\right)$, with $\sigma'$ denoting the derivative of the activation function $\sigma$, and $\odot$ representing element-wise multiplication. Once the solution for $\frac{\partial \mathcal{L}}{\partial \hat{\mathbf{Z}}}$ is obtained, the remaining gradients can be computed as follows:

\begin{align}
\label{eq:gradient_other}
\frac{\partial \mathcal{L}}{\partial \mathbf{W}} = \hat{\mathbf{Z}}^T \hat{\mathbf{A}} \frac{\partial \mathcal{L}}{\partial \hat{\mathbf{Z}}} ,~
\frac{\partial \mathcal{L}}{\partial \Omega} = \left\langle \frac{\partial b_{\Omega}(\hat{\mathbf{X}})}{\partial \Omega}, \frac{\partial \mathcal{L}}{\partial \hat{\mathbf{Z}}} \right\rangle
\end{align}

Our overall training approach is presented in Algorithm \ref{algo}. The total number of iterations is \( T \). We begin by computing the fixed-point embeddings \( \hat{\mathbf{Z}} \), followed by sampling a batch of hyperedges \( \mathcal{B}_e \) and corresponding nodes \( \mathcal{B}_v \). The model then processes the sampled batch to compute the membership regularization loss. Additionally, classification is performed using fixed-point embeddings. We then compute gradients and update the parameters.

\subsection{Complexity Analysis}
The complexity of our method primarily arises from computing the embeddings, with a base complexity of $O(Cd^2 + (n + E)d)$, where $C$ represents the sum of the number of nodes incident to each hyperedge, $d$ is the hidden dimension, $E$ is the number of hyperedges, and $n$ is the number of nodes. To compute the embeddings, our model requires two fixed-point iteration processes—one during the forward pass and one during the backward pass—introducing an additional factor of $T$, the number of iterations. This results in a complexity of $O(Cd^2T + (n + E)dT)$. For the projection step, we project each row of the weight matrix onto an $\ell-1$ ball, which can be performed efficiently. Therefore, the overall per-iteration complexity of our method is dominated by $O(Cd^2T + (n + E)dT)$.

\section{Experiments}
In this section, we contrast the performance of our proposed approach, \textsc{IHNN}, with state-of-the-art baselines and answer the following research questions: \textbf{RQ1:} How is the performance of \textsc{IHNN} compared to other state-of-the-art hypergraph representation-learning methods? \textbf{RQ2:} What is the individual contribution of each component of \textsc{IHNN}? \textbf{RQ3:} How stable is the performance of \textsc{IHNN} against changes in hyperparameter values? \textbf{RQ4:} Can IHNN find meaningful applications in EHR data? We will first introduce the datasets used in this work to answer these questions and then describe the baselines.
\subsection{Datasets}
\begin{table}[ht!]
    \centering
	\caption{  \label{rel_view_data}
		Summary Statistics of the Datasets used}
	
	\footnotesize
	\begin{tabular}{|c|c|c|c|c|c|}
		\hline
		& \textbf{DBLP} & \textbf{Cora} &\textbf{Walmart-trips} & \textbf{High-school} & \textbf{House-bills} \\
		\hline
		$|\mathcal{V}|$ & $43413$ & $2708$   & $88860$ & $327$ & $1494$\\
		$|\mathcal{E}|$ & $22535$ & $1072$ & $69906$ & $7818$ & $60987$\\
		$|X|$ & $1425$ & $1433$ & $64$ & $64$ & $64$\\
	   $|\mathcal{Y}|$ & $6$ & $7$ & $11$ & $9$ & $2$\\
		$|e|_{max}$ & $202$ & $43$ & $25$ & $148$ & $6220$\\
		\hline
	\end{tabular}
\end{table}
\begin{table*}[ht!]
\centering
\caption{Results for transductive node classification experiments. We report the mean accuracy with standard deviation taken across 10 independent runs. As observed, our proposed approach \textsc{IHNN} performs the best in all datasets.}
\label{tab:main_result}
\centering
\begin{tabular}{|c|c|c|c|c|c|c|c|}
\hline
\textbf{Data} & \textbf{Method} & \textbf{DBLP} & \textbf{Cora} & \textbf{Walmart-trips} & \textbf{High-school} & \textbf{House-bills} & \textbf{Overall Rank}\\ 
Type  &   & co-authorship  & co-authorship  & co-purchases  & contact  & bill sponsors & lower better\\ \hline
$\mathbf{\mathcal{H}}$  & CI  & $45.19 \pm 0.9$  & $44.55 \pm 0.6$  & $25.38 \pm 1.2$  & $66.29 \pm 2.5$  & $51.17 \pm 0.2$ & 10.4\\ 
$\mathbf{X}$  & MLP  & $72.23 \pm 2.0$  & $58.75 \pm 0.7$  & $22.59 \pm 0.9$  & $65.52 \pm 1.5$  & $36.94 \pm 3.9$ & 10.4\\ 
$\mathbf{\mathcal{H}, X}$  & MLP+HLR  & $69.58 \pm 2.1$  & $65.13 \pm 1.8$  & $32.83 \pm 0.6$  & $73.49 \pm 0.8$  & $58.31 \pm 4.2$ & 8.8\\ 
$\mathbf{\mathcal{H}, X}$  & HGNN  & $74.25 \pm 2.1$  & $68.10 \pm 1.9$  & $41.57 \pm 1.0$  & $82.24 \pm 4.1$  & $81.86 \pm 5.5$ & 5.0\\ 
$\mathbf{\mathcal{H}, X}$  & HCHA  & $72.79 \pm 1.6$  & $65.26 \pm 0.3$  & $41.22 \pm 0.6$  & $81.16 \pm 3.2$ & $80.26 \pm 1.4$ & 6.8\\ 
$\mathbf{\mathcal{H}, X}$  & HyperGCN  & $75.19 \pm 2.0$  & $69.92 \pm 1.8$  & $30.80 \pm 0.1$  & $77.46 \pm 1.6$  & $75.83 \pm 4.9$ & 6.4\\ 
$\mathbf{\mathcal{H}, X}$  & HNHN  & $72.46 \pm 0.5$  & $67.03 \pm 1.1$  &  $41.88 \pm 0.9$ & $81.47 \pm 1.3$  & $52.82 \pm 0.7$ & 6.8\\ 
$\mathbf{\mathcal{H}, X}$  & AllSet  & $81.27 \pm 0.8$  & $67.25 \pm 0.4$ & $\underline{45.15 \pm 0.3}$ & $81.19 \pm 3.4$  & $78.99 \pm 3.1$ & 4.6\\ 
$\mathbf{\mathcal{H}, X}$  & ED-HNN & $\underline{85.17 \pm 0.1}$ & $69.91 \pm 0.7$ & $43.42 \pm 0.6$ & $84.77 \pm 1.1$ & $88.79 \pm 0.9$ & 3.0\\
$\mathbf{\mathcal{H}, X}$  & PhenomNN & $81.08 \pm 0.3$ & $\mathbf{72.79 \pm 1.3}$ & $42.74 \pm 0.3$ & $\underline{87.42 \pm 1.3}$& $\underline{88.94 \pm 1.2}$ & \underline{2.6}\\
\hline

$\mathbf{\mathcal{H}, X}$  & IHNN  & $\mathbf{87.61 \pm 0.4}$  &  $\underline{70.22 \pm 2.1}$ & $\mathbf{45.29 \pm 0.2}$  & $\mathbf{89.91 \pm 1.8}$ & $\mathbf{90.04 \pm 0.7}$& \textbf{1.2}\\ \hline
\end{tabular}
\end{table*}
To evaluate the performance of \textsc{IHNN} and all the baseline methods, we use the following hypergraph datasets on transductive node classification:
\begin{itemize}
    \item \textbf{DBLP} is a co-authorship dataset~\cite{yadati2019hypergcn}, where the nodes are authors and the hyperedges represent co-authorship relations. Node labels correspond to any of the 6 conference categories, namely `algorithms', `database', `programming', `datamining', `intelligence', and `vision'. For all the experiments, we use the train: test split given in~\cite{yadati2019hypergcn}. Note that each split has 70 to 30 train to test ratio.
    \item \textbf{Cora:} We use the Cora co-authorship dataset and the train: test splits created by~\cite{yadati2019hypergcn}. Here, the predictive task is again a multi-class node classification like DBLP.
    \item \textbf{Walmart-trips:} In this dataset~\cite{amburg2020clustering}, the hyperedges are sets of co-purchased products at Walmart, while the nodes denote the products themselves. The node labels denote the 11 broad categories the products, making the predictive task on this dataset also a multi-class node classification task.  
    \item \textbf{High-school:} This dataset, obtained from~\cite{benson2018simplicial} is constructed from interactions recorded by wearable sensors by students at a high school~\cite{amiri2018netgist}. The sensors record interactions at a resolution of 20 seconds. Each hyperedge corresponds to a group of students that were all in proximity of one another at a given time, based on data from sensors worn by students. Each node is labeled with the grade the student belongs to.
    \item \textbf{House-bills:} This dataset, obtained from~\cite{amburg2020clustering,chodrow2021generative,veldt2020minimizing} is constructed from the co-sponsorship of bills in the US House of Representatives. The nodes denote US Congress-persons, and the hyperedges denote the co-sponsors of bills put forth in the House of Representatives. Node label denotes the representative's political party affiliation.
\end{itemize}
The summary statistics of the datasets are given in Table~\ref{rel_view_data}.
\subsection{Baselines and Evaluation Metric}
\par\noindent\textbf{Baselines:} To evaluate the benefit of our proposed approach, we contrast IGNN against the following state-of-art non-trivial hypergraph representation learning baselines:
\begin{itemize}
    \item \textbf{Confidence Interval Method (CI):} Here the exact ranges of unlabeled vertices of a hypergraph are estimated through a confidence interval approach to determine the optimal solutions and a simpler subgradient method for solving the convex program~\cite{zhang2017re}. Note that this method only leverages the structure of the hypergraph to obtain the node embeddings.
    \item \textbf{Multilayer Perceptron (MLP):} This baseline ignores the hypergraph structure and only leverages the node features by considering each node to be an i.i.d. sample to make label predictions.
    \item \textbf{Multilayer Perceptron with Hypergraph Laplacian Regularization (MLP+HLR):} This baseline method~\cite{zhou2006learning} indirectly imposes the hypergraph structure by regularizing the embeddings learned by the MLP with a regularizer that depends on the hypergraph laplacian. 
    \item \textbf{Hypergraph Neural Network (HGNN):} This method~\cite{feng2019hypergraph} obtains the node embeddings by defining a heuristic hypergraph convolution module parameterized by the hypergraph Laplacian.
    \item \textbf{Hypergraph Convolution and Hypergraph Attention (HCHA):} Building on the message propagation through hypergraph convolution as in HGNN, this method~\cite{bai2021hypergraph} uses different degree normalizations and attention weights, with the attention weights depending on node features and the hyperedge features. However, the absence of explicit hyperedge features in the datasets prevents the application of attention to the hyperedges.
    \item \textbf{Hypergraph Networks with Hyperedge Neurons (HNHN):} This is again a variant~\cite{dong2020hnhn} of the hypergraph convolution combines a hypergraph convolution network with nonlinear activation functions and a normalization scheme to adjust the importance of high-cardinality hyperedges and high-degree vertices.
    \item \textbf{Graph Convolutional Networks on Hypergraphs (HyperGCN):} This method~\cite{yadati2019hypergcn} proposes a unique transformation method of a hypergraph to a graph and approximates each hyperedge of the hypergraph by a set of pairwise edges connecting the vertices of the hyperedge and treats the learning problem as a graph learning problem on the approximation.
    \item \textbf{AllSet Transformer (Allset):} This baseline~\cite{chienyou} uses an attention mechanism during the pooling of information during the hypergraph convolution step. However, this comes with an additional cost of a larger number of trainable parameters, which may not be optimized with less training data.
    \item \textbf{Equivariant Hypergraph Diffusion Neural Network(ED-HNN):} This baseline~\cite{wangequivariant} uses diffusion mechanisms to design a new hypergraph message passing structure.
    \item \textbf{Purposeful Hyperedges in Optimization Motivated Neural Networks(PhenomNN):} This baseline~\cite{wang2023hypergraph} defines parametrized hypergraph-regularized energy functions and demonstrates how their minimizers serve as learnable node embeddings.
\end{itemize}

\par\noindent\textbf{Evaluation Metric:} Like the prior works~\cite{chienyou,dong2020hnhn}, we use accuracy as the evaluation metric for all our experiments. While some related work~\cite{yadati2019hypergcn} report the mean test error, note that it is simply (100 - average accuracy).
\subsection{Experimental Setup}
We conducted all experiments on AMD EPYC 7763 64-Core Processor with 1.08 TB of memory and 8 NVIDIA A40 GPUs with CUDA version 12.1. Our code and experimental setup, including data construction, are available for peer review. \footnote{\url{https://github.com/Soothysay/IHNN}}. 
For all the datasets, the train: test split is 30:70, and each model's performance is averaged over 10 independent runs containing random data splits. For experiments on DBLP and Cora, all models are trained for 200 epochs, while High-school and House-Bills are trained for 1000 epochs.

\section{Results}
 We follow the transductive experimental setup described in Section 4. Note that the datasets chosen for our experiments come from different domains and significantly differ in structure. While Cora and DBLP are both co-authorship datasets, DBLP is much larger than Cora in terms of the number of nodes and the hyperedges. Additionally, each hyperedge in Cora is more dense than DBLP. On the other hand, High-school dataset is a contact-specific smaller dataset (like Cora) where the maximum number of hyperedges for a node is similar to that of DBLP. House-bills (bill sponsorship) dataset has the highest number of hyperedges connected to a given node and the associated task labels are binary. Finally, Walmart-trips is the largest dataset, both in terms of the total number of nodes and hyperedges. Additionally, the initial node features of Walmart-trips, High-school, and House-bills are randomly initialized, contrary to Cora and DBLP.  
\subsection{RQ1: Transductive Node Classification}
Our first experiment is a simple transductive node classification on the five datasets described earlier. The results are summarized in Table \ref{tab:main_result}. Overall, the results demonstrate that IHNN outperforms all the state-of-the-art baselines with an overall rank of 1.2, while the closest baseline has the overall rank of 2.6. 
This result demonstrates that IHNN is able to incorporate both node and edge information better than the baselines.

Since our experiment involves diverse datasets from different application domains, there are several interesting observations we make.
 First, let us consider the co-authorship networks, Cora and DBLP. AllSet Transformer, which is a generalized message-pooling mechanism, outperforms HGNN by 9.45\% on DBLP, while having a similar performance on Cora. However, IHNN outperforms AllSet by 7.80\% on DBLP and 4.41\% on Cora. Second, we notice that MLP outperforms CI on these two datasets but has worse performance for the 3 other datasets. This is because the co-authorship datasets have initial node features computed from the data while the initial node features were randomly initialized for the 3 other datasets. So, CI which uses the hypergraph structure to form node representations is not affected by input data noise like MLP in the 3 other datasets. Thirdly, we observe that IHNN significantly outperforms the nearest baseline on datasets where the number of hyperedges is much larger than the number of nodes (DBLP, High-school, House-bills). This demonstrates the ability of IHNN to capture long-range dependencies between nodes in hypergraphs. This empirical result is similar to the observation~\cite{gu2020implicit} previously made for standard graphs. Overall, IHNN outperforms the nearest baseline on 4 out of the 5 datasets. 

\subsection{RQ2: Ablation Study}
\begin{figure}
    \centering
    \includegraphics[width=0.8\linewidth]{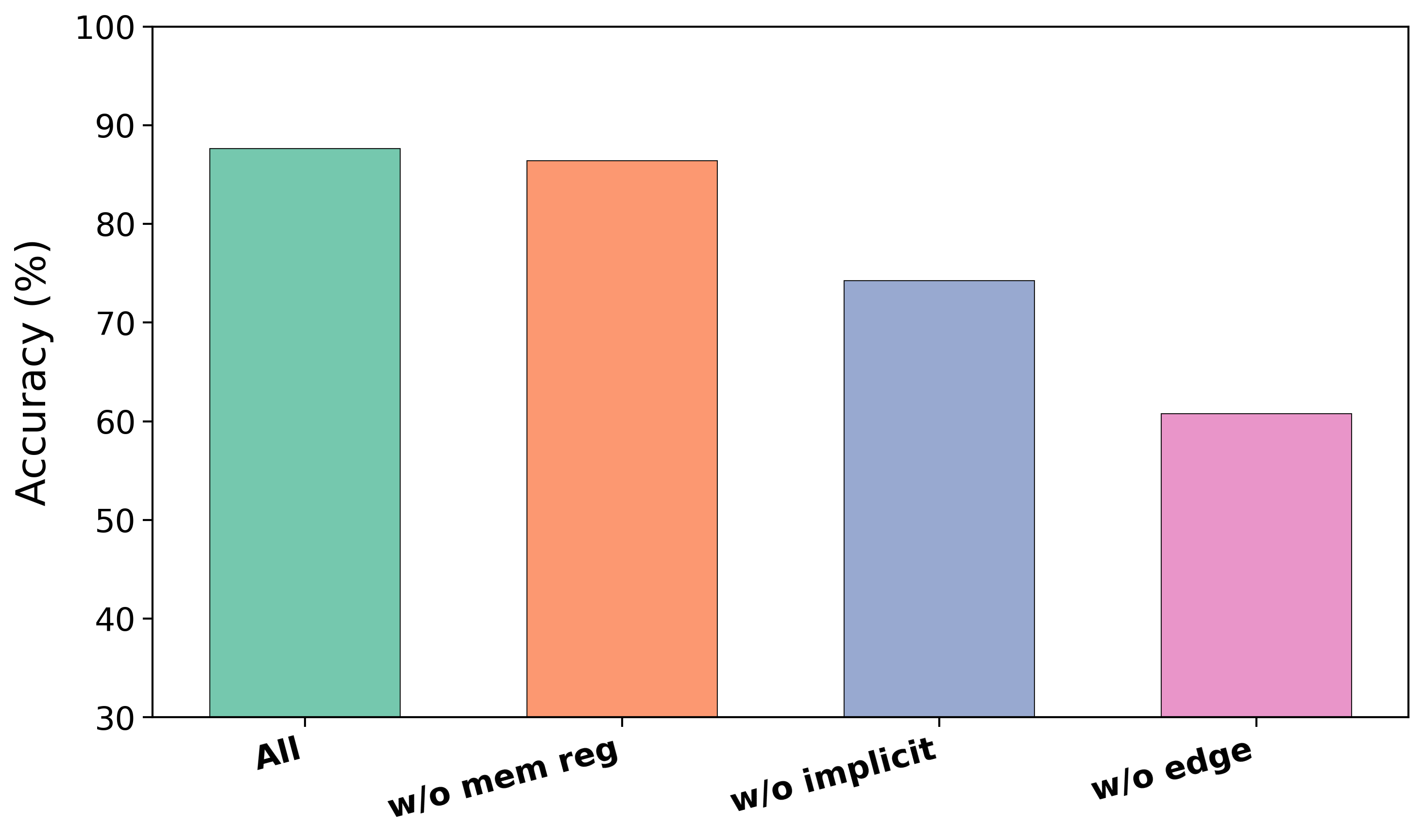}
    \caption{Ablation Study: Analysis of the individual contributions of the components of IHNN for DBLP.}
    \label{fig:ablation}
\end{figure}
The ablation study demonstrates the individual contribution of each component of IHNN towards the overall performance. To evaluate the contributions of the individual components, we ran experiments on the DBLP dataset dropping each component of IHNN at a time and evaluating the change in performance. Recall that the DBLP dataset contains a set of initial node features derived from the domain. The initial hyperedge features are constructed by mean-pooling the node features. 

The results are presented in Figure~\ref{fig:ablation}.  We display the mean test accuracy for the 30:70 train: test split averaged across 10 independent runs for each component drop. First of all, we notice that abandoning the implicit form (w/o implicit) results in about a 15.99\% drop in the accuracy of the overall framework, highlighting the need for an implicit model. Additionally, we highlight the importance of the cyclic form of our model and initial edge embeddings by randomly initializing edge features, which results in about a 32.90\% drop in performance. Finally, using IHNN without the membership regularizer (w/o mem reg) leads to a 1.39\% drop in performance. This shows that each individual component of our overall framework is important for the gain in performance.

Thus, we can conclude that each individual part of the overall framework makes a positive contribution to the overall performance. Additionally, the conjunction of all components leads to superior overall performance.
\subsection{RQ3: Sensitivity Analysis}
\begin{figure}[h!]
    \centering
    \includegraphics[width=\linewidth]{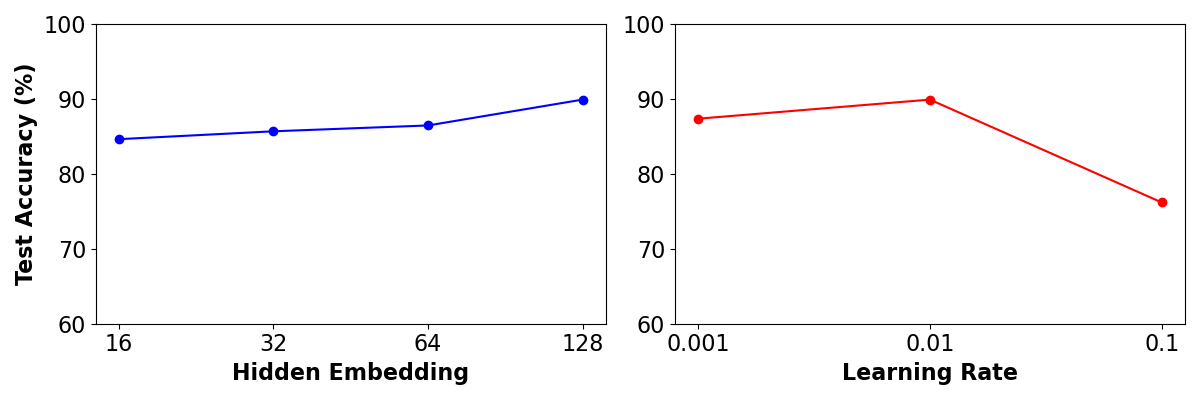}
    \caption{Sensitivity analysis of IHNN for the embedding size (left) and the learning rate (right). As observed, our proposed approach is relatively robust to the changes in embedding size  and the lower learning rate leads to better performance.}
    \label{fig:sensitivity}
\end{figure}
We explored the variation of the hyperparameters and tested performance by varying the latent embedding size by \{16, 32, 64, 128\} and the learning rate by \{0.1, 0.01, 0.001\} on the High-school dataset. The overall results of our experiments are presented in Figure~\ref{fig:sensitivity}. In the first experiment, we fix learning rate to 0.01 and vary the embedding size. We observe that the performance of IHNN is relatively stable to the variation of the hidden embedding size. Notably, varying the size embedding size from 32 to 64 leads to a 1.14\% gain in performance while changing from 64 to 128 brings a gain in performance by about 4\%. Second, by fixing the hidden embedding size to 128, and varying the learning rate, we observe that IHNN is stable to lower values of learning rate while the performance degrades considerably when the learning rate is set to 0.1. This means that a larger learning rate causes the overall optimization process to move away from the global minima causing degradation in performance. Thus we use a hidden embedding size of 128 and a learning rate of 0.01 for the experiments on the High-school dataset.
\subsection{RQ4: Case Study: Uncovering Medical Code Relationships from EHR Data}

To demonstrate the broader applicability of IHNN, we perform a case study on real-world health data. 
Hypergraphs have been used to encode the relationship between medical codes in Electronic Health Records (EHR) and also have been to improve the prediction of patient risks at an individual level~\cite{xu2023hypergraph,choudhuri2025domain}. Our case study focuses on leveraging IHNN to estimate patient risks and then analyzing the inferred relationships between medical codes.


We begin by extracting inpatient medical information and ICD-9 billing codes for patients admitted between January 1, 2011, and June 30, 2011 from a real-world EHR dataset (\textbf{R-EHR}) collected from the University of Iowa Hospitals and Clinics (UIHC)~\cite{jang2022risk,choudhuri2023continually}. Each patient visit is typically associated with multiple ICD-9 codes indicating multiple diagnoses during the hospital stay. ICD-9 codes are obtained from the patient's discharge summary. Our risk prediction task for this case study is readmission prediction. We constructed binary labels indicating whether the patient was readmitted to the hospital within 30 days of discharge. The label statistics are provided in Table~\ref{tab:readmission_counts}. 
\begin{table}[h]
    \centering
    \begin{tabular}{|c|c|}
        \hline
        \textbf{Readmission} & \textbf{Count} \\
        \hline
        False & 5448 \\
        True & 2987 \\
        \hline
    \end{tabular}
    \caption{Readmission Counts Statistics for the case study.}
    \label{tab:readmission_counts}
\end{table}

In the hypergraph we constructed, each unique patient visit to the healthcare facility is considered a node, and ICD-9 codes serve as hyperedges. This resulted in a hypergraph consisting of 8435 nodes and 3375 hyperedges. We trained \textsc{IHNN} with node classification loss on the readmission label. We randomly split visits to a 30:70 train: test ratio. Note that the label imbalance makes the ROC-AUC score a more suitable metric for evaluating this predictive performance~\cite{choudhuri2025domain}. \textsc{IHNN} obtained a test ROC-AUC of 68.39. 
Here, we are also interested in understanding if the relations between the hyper-edges (ICD-9 codes) inferred by \textsc{IHNN} correspond to domain knowledge. Firstly, we mapped the 3375 ICD-9 codes to 274 broad Clinical Classification Software (CCS) concepts. We then used the number of unique CCS concepts as the number of clusters to perform spectral clustering on the hyperedge embeddings and display the top 10 largest (in terms of number of elements) clusters with T-SNE visualization in Figure~\ref{fig:Case_study}. In the figure, the purple cluster is distant from the rest. On manual inspection, we found that the cluster contains the ICD-9 codes referring to adverse effects \& poisoning, cardiovascular diseases, respiratory, renal, neurological, gastrointestinal, urinary, and endocrine disorders. Detailed Descriptions of the meanings of these codes are given in Table~\ref{tab:icd9_codes}.
\begin{table}[h!]
    \centering
    \footnotesize
    \renewcommand{\arraystretch}{1.0}
    \begin{tabularx}{\columnwidth}{|c|X|}
        \hline
        \textbf{ICD-9 Code} & \textbf{Description} \\
        \hline
        E947.9  & Unspecified drug or medicinal substance causing adverse effects in therapeutic use \\
        440.21  & Atherosclerosis of native arteries of the extremities with intermittent claudication \\
        780.2   & Syncope and collapse \\
        276.4   & Mixed acid-base balance disorder \\
        599.71  & Gross hematuria \\
        786.05  & Shortness of breath \\
        788.41  & Urinary frequency \\
        599.70  & Hematuria, unspecified \\
        V12.71  & Personal history of peptic ulcer disease \\
        530.19  & Other esophagitis \\
        428.20  & Systolic heart failure, unspecified \\
        V45.76  & Presence of automatic (implantable) cardiac defibrillator \\
        429.9   & Heart disease, unspecified \\
        275.42  & Hypercalcemia \\
        820.8   & Fracture of unspecified part of neck of femur (hip) \\
        786.6   & Swelling, mass, or lump in chest \\
        996.66  & Infection and inflammatory reaction due to internal joint prosthesis \\
        E941.3  & Adverse effect of antineoplastic and immunosuppressive drugs \\
        446.5   & Giant cell arteritis \\
        438.89  & Other late effects of cerebrovascular disease \\
        411.89  & Other acute and subacute forms of ischemic heart disease \\
        398.90  & Rheumatic heart disease, unspecified \\
        710.0   & Systemic lupus erythematosus \\
        996.61  & Infection and inflammatory reaction due to cardiac device, implant, and graft \\
        331.82  & Dementia with Lewy bodies \\
        598.9   & Urethral stricture, unspecified \\
        429.83  & Takotsubo syndrome \\
        440.4   & Chronic total occlusion of artery of the extremities \\
        \hline
    \end{tabularx}
    \caption{ICD-9 Codes and Their Descriptions for the purple points.}
    \label{tab:icd9_codes}
\end{table}

Amongst these codes, we observed that eight of them are directly or indirectly related to heart attack. \textbf{411.89} is directly related, as ischemic heart disease~\cite{feske2021ischemic} often leads to myocardial infarction. \textbf{429.9} is also directly linked, as it encompasses various heart conditions, including those leading to heart attacks. Indirectly related codes include \textbf{428.20} since heart failure can result from repeated heart attacks or prolonged ischemia; \textbf{398.90}, as valve damage from rheumatic fever increases the risk of ischemic events~\cite{remenyi2016valvular}; \textbf{440.21} and \textbf{440.4} since atherosclerosis is a major cause of coronary artery disease and heart attacks~\cite{falk2006pathogenesis}; \textbf{429.83}, Takotsubo syndrome, which mimics a heart attack and is triggered by extreme stress~\cite{butt2022long}; and \textbf{996.61}, as infections from cardiac implants can lead to complications affecting heart function~\cite{klug2007risk}. These conditions either directly cause myocardial infarction or indicate underlying cardiovascular issues that increase heart attack risk.
\begin{table}[h!]
    \centering
    \footnotesize
    \renewcommand{\arraystretch}{1.0}
    \begin{tabularx}{\columnwidth}{|c|X|}
        \hline
        \textbf{ICD-9 Code} & \textbf{Description} \\
        \hline
        386.11  & Benign paroxysmal positional vertigo \\
        595.1   & Chronic interstitial cystitis \\
        784.69  & Other symbolic dysfunction (e.g., speech disturbance) \\
        296.44  & Bipolar I disorder, most recent episode manic, severe, with psychotic features \\
        376.01  & Orbital cellulitis \\
        706.1   & Acne vulgaris \\
        296.42  & Bipolar I disorder, most recent episode manic, moderate \\
        158.8   & Malignant neoplasm of other specified sites of peritoneum \\
        790.22  & Impaired fasting glucose \\
        357.0   & Acute infective polyneuritis (e.g., Guillain-Barré syndrome) \\
        305.43  & Combination of opioid abuse with other drug abuse \\
        270.2   & Other disturbances of amino-acid metabolism \\
        357.4   & Polyneuropathy in other diseases classified elsewhere \\
        295.40  & Schizophreniform disorder, unspecified \\
        986     & Toxic effect of carbon monoxide \\
        571.9   & Unspecified chronic liver disease \\
        530.6   & Gastroesophageal reflux disease (GERD) \\
        719.40  & Pain in unspecified joint \\
        716.59  & Unspecified polyarthritis involving multiple sites \\
        253.1   & Primary hyperaldosteronism \\
        305.30  & Cannabis abuse, unspecified \\
        437.8   & Other specified cerebrovascular diseases \\
        959.9   & Unspecified injury \\
        \hline
    \end{tabularx}
    \caption{ICD-9 Codes and Their Descriptions for the teal points.}
    \label{tab:icd9_codes_teal}
\end{table}

Similarly, the cluster of points colored in teal primarily contains ICD-9 codes associated with mental health and neurological issues. A detailed description of the ICD-9 codes and their meanings is given in Table~\ref{tab:icd9_codes_teal}. \textbf{386.11} is directly related, as benign paroxysmal positional vertigo (BPPV) affects the vestibular system, leading to dizziness and balance issues~\cite{bhattacharyya2008clinical}. \textbf{357.0} and \textbf{357.4} involve polyneuropathies~\cite{sommer2018polyneuropathies}, which affect the peripheral nervous system and can result in weakness, numbness, or pain. \textbf{784.69} is linked to speech disturbances, often indicative of neurological disorders. Mental health conditions are represented by \textbf{296.44} and \textbf{296.42}, which describe different severities of bipolar I disorder, impacting mood and cognition. \textbf{295.40} covers schizophreniform disorder, a condition with symptoms resembling schizophrenia. Substance-related disorders, such as \textbf{305.43} (opioid abuse with other drug abuse) and \textbf{305.30} (cannabis abuse), highlight the intersection between mental health and addiction. \textbf{437.8}, which includes cerebrovascular diseases, may contribute to cognitive impairment and neurological deficits. These conditions directly or indirectly affect brain function, mental well-being, and the nervous system. 

\begin{figure}
    \centering
    \includegraphics[width=0.9\linewidth]{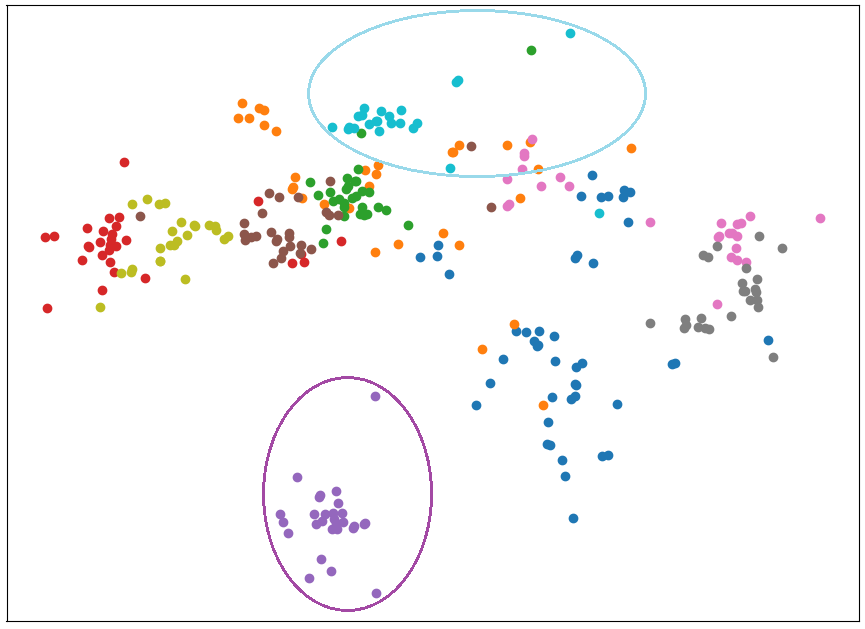}
    \caption{TSNE of edge embeddings for the top 10 clusters in R-EHR dataset. The colors represent the cluster types. In this case study we highlight the two clusters represented by the \textcolor{violet}{purple} and \textcolor{teal}{teal} circles.}
    \label{fig:Case_study}
\end{figure}
Thus, we observe that IHNN improves predictive performance by extracting meaningful relationships from real-world data. This provides additional interpretable insights behind the predictions made by IHNN in the medical domain and highlights the potential for the applicability of our proposed framework.

\section{Conclusion}
In this paper, we introduce the \textsc{Implicit Hypergraph Neural Network (IHNN)}, the first implicit model for learning hypergraph representations. \textsc{IHNN} jointly learns embeddings for both nodes and hyperedges and is guaranteed to converge. Additionally, it incorporates a novel membership loss, which has been empirically proven to enhance model performance. Our experimental results demonstrate that IHNN significantly outperforms state-of-the-art methods. Although the framework is sensitive to variations in learning rate, an additional case study shows the broader applicability of IHNN in solving real-world problems. Future directions of research include further exploration of additional regularization methods that work well in conjunction with implicit models, as well as connecting contrastive augmentations with IGNN.
\section{Acknowledgements}
This project is partially supported by the NSF Career Award IIS 2442159. Its contents are solely the responsibility of the authors and do not necessarily represent the official views of the NSF.
\bibliographystyle{IEEEtran}
\bibliography{bibliography}


\end{document}